\newtheorem{theorem}{Theorem}
\newtheorem{definition}{Definition}
\newtheorem{remark}{Remark}
\DeclareMathOperator{\tr}{tr}
\newcommand{\st}[1]{\text{s.t.}}
\newcommand{\vw}{\boldsymbol{w}}
\newcommand{\vx}{\boldsymbol{x}}
\newcommand{\vy}{\boldsymbol{y}}
\newcommand{\vz}{\boldsymbol{z}}
\newcommand{\vn}{\boldsymbol{n}}
\newcommand{\vv}{\boldsymbol{v}}
\newcommand{\vg}{\boldsymbol{g}}
\newcommand{\vu}{\boldsymbol{u}}
\newcommand{\bA}{\mathbf{A}}
\newcommand{\bB}{\mathbf{B}}
\begin{document}

\title{Federated Low-Rank Adaptation with Differential Privacy over Wireless Networks
\\
}

\author{\IEEEauthorblockN{Tianqu Kang, Zixin Wang, Hengtao He, Jun Zhang, Shenghui Song, and Khaled B. Letaief}\\
\IEEEauthorblockA{
Dept. of ECE, The Hong Kong University of Science and Technology, Hong Kong \\
Email: tkang@connect.ust.hk, \{eewangzx, eehthe, eejzhang, eeshsong, eekhaled\}@ust.hk
}}

\maketitle

\begin{abstract}
Fine-tuning large pre-trained foundation models (FMs) on distributed edge devices presents considerable computational and privacy challenges. Federated fine-tuning (FedFT) mitigates some privacy issues by facilitating collaborative model training without the need to share raw data. To lessen the computational burden on resource-limited devices, combining low-rank adaptation (LoRA) with federated learning enables parameter-efficient fine-tuning. Additionally, the split FedFT architecture partitions an FM between edge devices and a central server, reducing the necessity for complete model deployment on individual devices. However, the risk of privacy eavesdropping attacks in FedFT remains a concern, particularly in sensitive areas such as healthcare and finance. In this paper, we propose a split FedFT framework with differential privacy (DP) over wireless networks, where the inherent wireless channel noise in the uplink transmission is utilized to achieve DP guarantees without adding an extra artificial noise. We shall investigate the impact of the wireless noise on convergence performance of the proposed framework. We will also show that by updating only one of the low-rank matrices in the split FedFT with DP, the proposed method can mitigate the noise amplification effect. Simulation results will demonstrate that the proposed framework achieves higher accuracy under strict privacy budgets compared to baseline methods.

\end{abstract}

\begin{IEEEkeywords}
Differential privacy, edge AI, federated fine-tuning, low-rank adaptation.
\end{IEEEkeywords}

\section{{\color{black}Introduction}} \label{intro}

The rapid advancement of artificial intelligence (AI) has enabled the development of powerful pre-trained foundation models (FMs), such as large language models (LLMs) and large vision models (LVMs), which demonstrate remarkable capabilities across diverse domains \cite{10384606, 10183789, 9606720}. Fine-tuning these models for specific tasks often requires access to domain-specific data distributed across numerous edge devices in real-world applications.

Federated Learning (FL) has emerged as a promising paradigm for decentralized model training, enabling multiple edge devices to collaboratively learn a shared model without exchanging raw data \cite{LiuTWC2022, 10558823, bnn}. In the context of fine-tuning FMs, federated fine-tuning (FedFT) over wireless networks allows devices to adapt pre-trained models to their local data, leveraging collective knowledge \cite{10558823}. However, fine-tuning full FMs on resource-constrained edge devices is often impractical due to the substantial computational and memory demands. 
To reduce these demands, parameter-efficient fine-tuning (PEFT) methods, such as Low-Rank Adaptation (LoRA), reparameterize weight updates using low-rank matrices to reduce the number of trainable parameters \cite{hu2022lora}. However, even with LoRA, deploying full models on edge devices may still exceed the capacity of edge devices. A split FedFT architecture, proposed in \cite{wang2024federated}, addresses this issue by distributing components of the FM across edge devices and a central server. In this architecture, embedding and task-specific modules are placed on devices, while the computationally intensive encoder is deployed on the server. 
However, privacy concerns due to the potential risks associated with an untrusted server have not been considered. Specifically, gradient inversion attacks can exploit shared gradients transmitted during training to reconstruct sensitive data \cite{Hatamizadeh_2022_CVPR}, posing significant privacy threats to user information.

Differential Privacy (DP) has been proposed as a solution to protect against such adversarial attacks by adding artificial noise to shared gradients \cite{dwork2014algorithmic, Tianqu}. In this paper, we incorporate DP into the split FedFT framework to enhance data protection. 
We leverage inherent channel noise in wireless networks as a natural DP mechanism, reducing the need for artificial noise at the client side \cite{FreePrivacy}. We futher establish the relationship between privacy loss and wireless fading channels and introduce a privacy-aware power control policy to ensure robust privacy protection.

Integrating DP into the split FedFT, however, presents additional challenges. 
The cascaded architecture of the FM can amplify noise as gradients propagate through multiple layers, especially within low-rank matrix updates, destabilizing model training and degrading performance \cite{sun2024improving}. To address this issue, we propose a modified LoRA-based FedFT architecture that reduces noise amplification without compromising privacy protection. By updating only one low-rank matrix and fixing another matrix as a scaled orthonormal matrix, we reduce noise amplification by eliminating higher-order noise terms during backpropagation and stabilize the energy of noise in weight updates. Experimental results will demonstrate the superior performance in model accuracy under strict privacy budgets compared to baseline methods. The proposed approach is promising for the practical deployment of large-scale AI models on resource-constrained edge devices with robust privacy guarantees.

\section{System Model and Preliminary} \label{preliminary}


{\color{black}In this section, we present the split FedFT framework
in Section \ref{subsec:system}, followed by the integration of DP in Section \ref{subsec:fft}. We then describe the communication model in Section \ref{subsec: comm}, focusing on how edge devices transmit local gradient deviations via uncoded transmission in a Time Division Multiple Access (TDMA) system.}

\begin{figure}[t]
    \centering
    \includegraphics[width=0.48\textwidth]{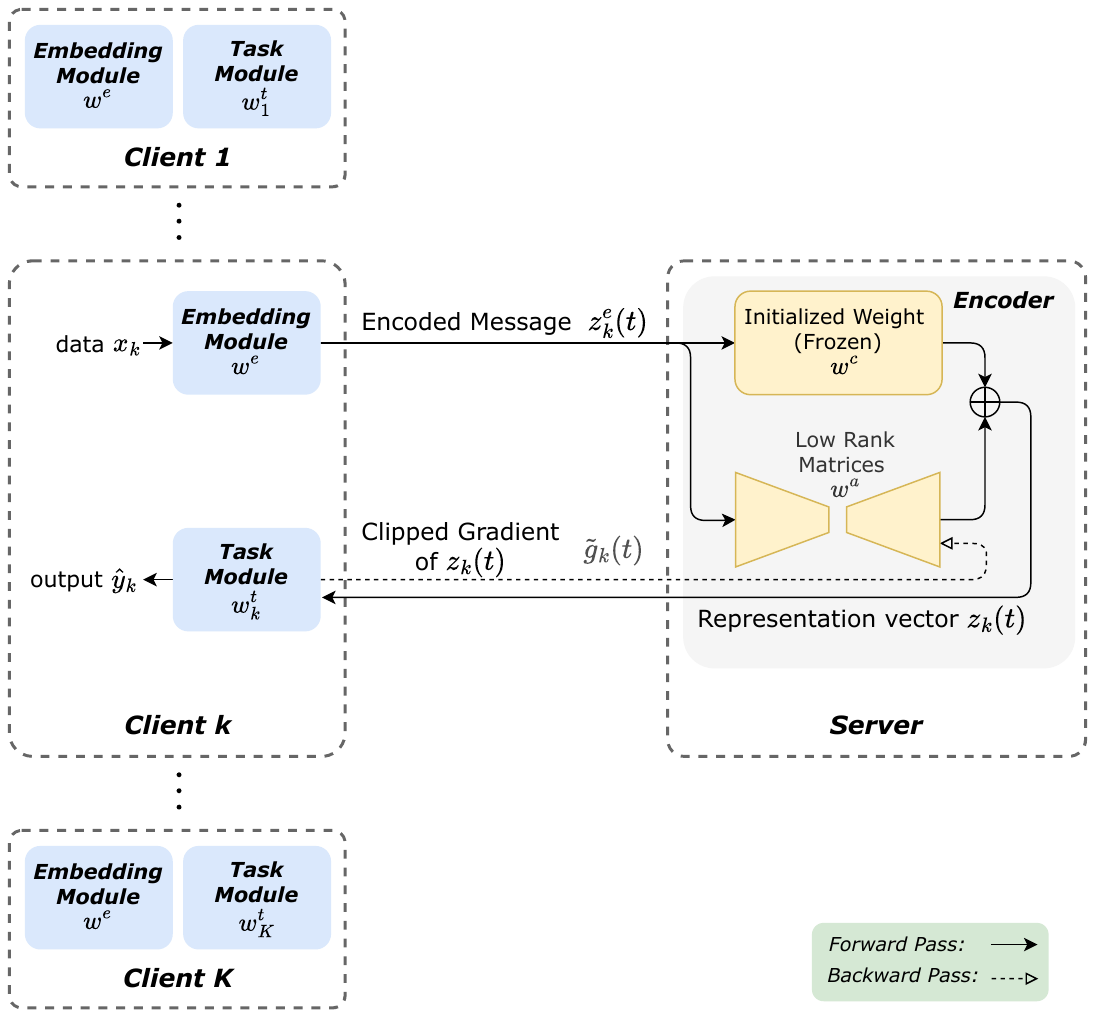}
    \caption{{\color{black}System model of the proposed LoRA-based FedFT, focusing on the communication between the $k$-th edge device and the edge server. The computation-intensive encoder resides at the edge server, while the embedding and task modules are on the edge devices. The forward pass is represented by solid black arrows, while the backward pass is shown with dashed arrows.}}
    \label{systemfig}
\end{figure}
{\color{black}

\subsection{{\color{black}Split LoRA-based FedFT Framework}} \label{subsec:system}
As illustrated in Fig.~\ref{systemfig}, we consider a split LoRA-based FedFT framework over wireless networks. 
Specifically, a single-antenna edge server coordinates $K$ single-antenna edge devices, denoted by $\mathcal{K} = \{1, 2, \ldots, K\}$, to collaboratively fine-tune a global model using their local datasets, which is denoted by $\mathcal{D}_k = \{(\vx_m, \vy_m)\}_{m=1}^{M}$. 
Following the architecture in \cite{wang2024federated}, the pre-trained FM $\vw^{\rm f}$ is divided into three components: the embedding module $\vw^{\rm e}$, the encoder module $\vw^{\rm c}$, and the task module $\vw^{\rm t}$.
 The embedding and task modules are deployed on the edge devices while the computation intensive encoder resides at the edge server. In particular, by applying LoRA to the encoder module at the server, a set of trainable low-rank matrices $\vw^{\rm a}$ is parallelly added to the encoder module while keeping the original encoder parameters unchanged.
{\color{black}The goal of the LoRA-based FedFT framework is to refine a set of the task-specific parameters $\{\vw_{k}^{\rm t}\}_{k=1}^{K}$ and the shared low-rank matrices $\vw^{\rm a}$, collectively denoted as $\vw = \{\vw^{\rm a}, \{\vw_{k}^{\rm t}\}_{k=1}^{K}\}$, to minimize the global loss function }$F(\cdot)$. The optimization problem is formulated as }
\begin{equation*}
    \begin{aligned}\label{EFT_obj} 
        \vw^{\star} & = \underset{\{\vw_k^{\rm t}\}_{k=1}^K, \vw^{\rm a}}{\arg\min} \; F\left(\vw; \vw^{\rm f}, \{{\mathcal{D}_k}\}_{k=1}^K\right) \\
        & =  \underset{\{\vw_k^{\rm t}\}_{k=1}^K, \vw^{\rm a}}{\arg\min}\,
        \sum_{k=1}^{K} \frac{|{\mathcal{D}_k}|}{{\sum_{k=1}^K{|\mathcal{D}_k}|}}
        \mathcal{L}_k\left(\vw_k^{\rm t}, \vw^{\rm a}; \vw^{\rm f}, \mathcal{D}_k\right),
    \end{aligned}
\end{equation*}
where {\color{black}$\mathcal{L}_k(\cdot)$} and $|{\mathcal{D}_k}|$ denote the local loss function and the size of the local dataset at device $k$, respectively.

\subsection{Split LoRA-based FedFT with DP}\label{subsec:fft}
While raw data privacy is preserved on edge devices in the LoRA-based FedFT framework, there remains a risk of privacy leakage through gradient information during aggregation. To address this issue, we integrate DP into the split FedFT framework. We first introduce the definitions of $(\varepsilon, \delta)$-DP and $\ell_2$ sensitivity, followed by a description of the complete training process incorporating DP.

\begin{definition} [$(\varepsilon, \delta)$-DP and  $\ell_2$ sensitivity]
An algorithm $\mathcal{M}: \mathcal{D}\rightarrow \mathcal{S}$ is $(\epsilon, \delta)$-DP if, for all neighboring databases $D, D' \in \mathcal{D}$  differing by a single record and all $S \subseteq \mathcal{S}$,
\begin{equation}
    \Pr[\mathcal{M}(D)\in S] \leq e^\varepsilon \Pr[\mathcal{M}(D')\in S] +\delta,
\end{equation}
where $\varepsilon \geq 0$ and $0<\delta<1$.

The $\ell_2$ sensitivity of a function $f$, denoted by $\Delta_2$ can be defined as 
\begin{equation}
    \Delta_2 = \max_{D,D'} \|f(D)-f(D')\|_2.
\end{equation}
\end{definition}


The training process of the split FedFT with a DP framework can be performed by the following steps in each communication round.
{\color{black}
\begin{enumerate}
    \item [$\bullet$]\textbf{Feature Extraction and Transmission}: Each edge device $k$ encodes its local data $\vx_k$ using the embedding module $\vw^{\rm e}$ {\color{black}to obtain the encoded message $\vz^{\rm e}_k(t)$} and transmits it to the edge server.

    \item [$\bullet$] \textbf{Server-side Processing and Feedback}: The edge server receives $\vz^{\rm e}_k(t)$ and processes it using the encoder $\vw^{\rm c}$ and the low-rank matrices $\vw^{\rm a}(t)$ to compute representations $\vz_k(t)$, which are then sent back to the devices.

    \item [$\bullet$] \textbf{Local Task Module Update}: Each device updates its task module $\vw_k^{\rm t}$ using $\vz_k(t)$, and computes the gradient deviation of the local loss with respect to $\vz_k(t)$
        \begin{equation}
           {\color{black} \vg_k(t)} = \frac{\partial \mathcal{L}_k}{\partial \vz_k(t)}.
        \end{equation}

    \item [$\bullet$] \textbf{Differential Privacy Processing \footnote{Both $\vz^{\rm e}_k(t)$ and $\color{black} \vg_k(t)$ can potentially reveal information about the local data. This work focuses on protecting $\color{black} \vg_k(t)$, specifically to protect label information, as labels often contain highly sensitive details (e.g., a diagnosis in medical settings or a legal decision) that may be more privacy-critical than feature data in certain applications.}}: Each device clips its gradient deviation $\vg_k(t)$ so that its $\ell_2$ norm does not exceed $C$. That is,
    \begin{equation}
        \tilde{\vg}_k(t) = \frac{C}{\max(C, \|\vg_k(t)\|_2)} \vg_k(t),
    \end{equation}
    which is then transmitted to the edge server in an uncoded manner.

    \item [$\bullet$] \textbf{Global Model Update}: 
    According to the received noisy gradient deviations $\{\tilde{\vg}_k(t)\}$,  the server computes the gradient with respect to the low-rank matrices using the chain rule,
        \begin{equation}\label{eq:gradient}
            \nabla_{\vw^{\rm a}} \mathcal{L}_k(t) = \tilde{\vg}_k(t) \frac{\partial \vz_k(t)}{\partial \vw^{\rm a}}.
        \end{equation} The server aggregates the gradients from all devices as follows,
        \begin{equation}
            \vg^{\rm a}(t) = \sum_{k=1}^{K} \frac{|{\mathcal{D}_k}|}{{\sum_{k=1}^K{|\mathcal{D}_k}|}} \nabla_{\vw^{\rm a}} \mathcal{L}_k(t).
        \end{equation} 
        Finally, the global low-rank matrices are updated as
        \begin{equation}
            \vw^{\rm a}(t+1) = \vw^{\rm a}(t) - \eta \vg^{\rm a}(t),
        \end{equation}
        where $\eta$ is the learning rate.
\end{enumerate}
}

\subsection{Communication Model} \label{subsec: comm}
With the split LoRA-based FedFT framework and DP integration established, we now consider the communication model of the wireless network. 
We assume a block flat-fading channel model, where the channel coefficients remain constant within each block but vary between blocks. Let \( h_k(t) \geq 0 \) denotes the channel coefficient between edge device $k$ and the edge server in the uplink, and \( \boldsymbol{n}_k(t) \sim \mathcal{N}(0, N_0 \mathbf{I}) \) denote the additive white Gaussian noise with zero mean and single-sided noise power spectral density $N_0$. We assume perfect channel state information is available at the edge devices, and the effective channel coefficients are considered \textit{real and non-negative} to simplify the privacy analysis \cite{FreePrivacy}.
All edge devices time-share the channel via TDMA, where each device is scheduled sequentially to transmit its local gradient deviations $\tilde{\vg}_k(t)$ to the edge server via uncoded transmission.

The received gradient deviations at the edge server can be represented by 
\begin{equation} \label{eq:bpuplinkTransmission}
    \boldsymbol{y}_k(t) = h_k(t) \alpha_k^{(t)} \tilde{\boldsymbol{g}}_k(t) + \boldsymbol{n}_k(t),
\end{equation}
where \( \alpha_k^{(t)} \) denotes the {\color{black}scaling factor} of edge device $k$.
Besides, the transmit power of each device \(k\) are constrained by
\begin{equation}\label{eq:power_constraint}
    \mathbb{E}\left[ \left\| \alpha_k^{(t)}\tilde{\boldsymbol{g}}_k(t) \right\|^2 \right] 
    = \left(\alpha_k^{(t)}C\right)^2
    \leq P_k,
\end{equation}
where $C$ is the clipping threshold defined in Seciton \ref{subsec:fft}. The associated signal to noise ratio (SNR) of edge device $k$ is given by
\begin{equation}\label{eq:snr}
    {\color{black}\text{SNR}_k = \frac{\left(\alpha_k^{(t)}C\right)^2}{d N_0}.}
\end{equation}
Then, the server scales the received signal by \( h_k(t) \) to estimate the gradient as,
\begin{equation}\label{eq:gradient_estimation}
    \hat{\vg}_k(t) = \alpha_k^{(t)} \tilde{\vg}_k(t) + \frac{\vn_k(t)}{h_k(t)},
\end{equation}
where the effective noise 
$\frac{\vn_k(t)}{h_k(t)}$
also follows Gaussian distribution with variance 
\( \sigma_{\text{eff}}^2 = {N_0}/{h_k^2(t)} \).

\section{Proposed Method and Analysis} \label{method}
{\color{black}In this section, we first analyze the performance of DP in the split FedFT with LoRA over wireless networks, followed by developping a privacy-aware power control strategy under the stringent privacy constraint. Additionally, we investigate the impact of channel noise amplification, and propose a novel initialization strategy to enhance stability of the split FedFT syste with DP.}

\subsection{Privacy Analysis}
\label{subsec:dp}
{\color{black}
To achieve \emph{local differential privacy (LDP)}, i.e., ensuring the data privacy of each edge device under an untrusted edge server, the channel noise is leveraged for privacy preservation at the edge devices. However, due to the cascaded nature of the split FedFT system, the added noise of the gradient deviations can be amplified in \eqref{eq:gradient}. 
Specifically, the noise component \( \frac{\vn_k(t)}{h_k(t)} \) in \( \hat{\vg}_k(t) \) is propagated through the Jacobian matrix \( \frac{\partial \vz_k(t)}{\partial \vw^{\rm a}} \). Let \( \sigma_{\text{min}} \) and \( \sigma_{\text{max}} \) denote the smallest and largest singular values of \( \frac{\partial \vz_k(t)}{\partial \vw^{\rm a}} \), respectively, where the condition number \( \kappa \) is defined as,
\begin{equation}
    \kappa = \frac{\sigma_{\text{max}}}{\sigma_{\text{min}}}.
\end{equation}
Therefore, the SNR after propagation is bounded by,
\begin{equation}
    \frac{1}{\kappa^2} \cdot \text{SNR}_k \leq \text{SNR}_{\text{after}} \leq \kappa^2 \cdot \text{SNR}_k,
\end{equation}
where \( \text{SNR}_k \) is the SNR of the received gradient \( \hat{\vg}_k(t) \), and \( \text{SNR}_{\text{after}} \) is the SNR of the gradient after propagation through the Jacobian.
A large condition number \( \kappa \) implies that the Jacobian \( \frac{\partial \vz_k(t)}{\partial \vw^{\rm a}} \) can significantly amplify the noise, leading to a degradation of the SNR and performance loss. 

To characterize the impact of channel noise on the learning performance with respect to the SNR of the received gradient \( \hat{\vg}_k(t) \), we first derive the following theorem with respect to the effective noise scale \( \sigma_{\text{eff}} \) based on the \emph{Gaussian mechanism}.

\begin{theorem} \label{thm:privacy}
   Upon the fading channel with channel noise \( \vn_k(t) \) and training epoch $T$, there exist constants $c_1$ that the gradient transmission of edge device $k$ satisfies \((\varepsilon, \delta)\)-local differential privacy, such that
\begin{equation}
    \varepsilon = c_1 h_k(t) \sqrt{T \ln(1/\delta)} \cdot \sqrt{ \text{SNR}_k \cdot d }, \quad \delta > 0.
\end{equation}
\end{theorem}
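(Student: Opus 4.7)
The approach is to treat each uplink transmission of the clipped gradient deviation as one invocation of the Gaussian mechanism, compute its per-round $(\varepsilon_0,\delta_0)$-DP guarantee, and then invoke composition across the $T$ training rounds. I would carry this out in the following order.

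\textbf{Step 1: Identify the released function and its $\ell_2$ sensitivity.} From the server's viewpoint after rescaling \eqref{eq:gradient_estimation}, the quantity it observes is the query $f_k(D_k) = \alpha_k^{(t)} \tilde{\vg}_k(t)$ plus isotropic Gaussian noise $\vn_k(t)/h_k(t) \sim \mathcal{N}(0,(N_0/h_k^2(t))\mathbf{I}_d)$. Because clipping in Section \ref{subsec:fft} ensures $\|\tilde{\vg}_k(t)\|_2 \le C$, the worst-case change of $f_k$ under a single-record swap of $\mathcal{D}_k$ gives the $\ell_2$ sensitivity $\Delta_2 = 2\alpha_k^{(t)}C$ (or $\alpha_k^{(t)}C$ up to a constant that will be absorbed into $c_1$).

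\textbf{Step 2: Apply the Gaussian mechanism per round.} The classical Gaussian mechanism (Dwork \& Roth) yields $(\varepsilon_0,\delta_0)$-DP in a single round with $\varepsilon_0 \le c\,\Delta_2\sqrt{\ln(1/\delta_0)}/\sigma_{\text{eff}}$. Substituting $\sigma_{\text{eff}}=\sqrt{N_0}/h_k(t)$ and $\Delta_2 \propto \alpha_k^{(t)}C$ yields
\begin{equation*}
    \varepsilon_0 \;\le\; c\,h_k(t)\,\frac{\alpha_k^{(t)}C}{\sqrt{N_0}}\sqrt{\ln(1/\delta_0)}.
\end{equation*}
Now I use the SNR definition \eqref{eq:snr}, which gives $\alpha_k^{(t)}C = \sqrt{d\,N_0\,\text{SNR}_k}$, to eliminate the transmit parameters and obtain $\varepsilon_0 \le c\,h_k(t)\sqrt{d\,\text{SNR}_k\,\ln(1/\delta_0)}$.

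\textbf{Step 3: Compose across $T$ rounds.} Since each round of the split FedFT protocol performs an independent Gaussian release, I apply the advanced (strong) composition theorem, which states that $T$-fold composition of $(\varepsilon_0,\delta_0)$-DP mechanisms is $(\varepsilon_0\sqrt{2T\ln(1/\delta')}+T\varepsilon_0(e^{\varepsilon_0}-1),\,T\delta_0+\delta')$-DP. In the small-$\varepsilon_0$ regime the dominant term scales like $\varepsilon_0\sqrt{T\ln(1/\delta)}$, and absorbing all numerical constants into a single $c_1$ yields
\begin{equation*}
    \varepsilon \;\le\; c_1\,h_k(t)\sqrt{T\ln(1/\delta)}\cdot\sqrt{\text{SNR}_k\cdot d},
\end{equation*}
which is exactly the statement of Theorem~\ref{thm:privacy}. (Alternatively, a tighter proof via the Rényi/moments accountant of \cite{FreePrivacy} gives the same functional form.)

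\textbf{Main obstacle.} The cleanest technical point is bookkeeping the per-round versus total $\delta$ and collecting all absolute constants (the $\sqrt{2\ln(1.25/\delta)}$ from the Gaussian mechanism, the $\sqrt{2}$ from strong composition, the clipping factor of $2$) into a single constant $c_1$; the derivation itself is routine once sensitivity and effective noise variance are correctly identified. A secondary subtlety is that $h_k(t)$ and $\alpha_k^{(t)}$ may vary per round, so strictly the bound should involve $\max_t h_k(t)\sqrt{\text{SNR}_k(t)}$, but the statement is naturally read as an upper bound using a representative round, which is standard in this line of work.
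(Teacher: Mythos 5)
Your proposal is correct and follows essentially the same route as the paper: the paper's proof simply invokes the moments accountant over $T$ iterations with sensitivity $\alpha_k^{(t)}C$ to get $\varepsilon = c_1\alpha_k^{(t)}C\sqrt{T\ln(1/\delta)}/\sigma_{\text{eff}}$, then substitutes $\alpha_k^{(t)} = \sqrt{\text{SNR}_k\, d N_0/C^2}$ and $\sigma_{\text{eff}}=\sqrt{N_0}/h_k(t)$, exactly the two substitutions you perform. Your more explicit decomposition into per-round Gaussian mechanism plus advanced composition yields the same functional form with constants absorbed into $c_1$, and you correctly flag the moments-accountant shortcut the paper actually uses.
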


\begin{proof}
Applying the Moments Accountant theorem over \( T \) iterations with sensitivity as $\alpha_k^{(t)} C$ \cite{dwork2014algorithmic, RenyiDP}, $\exists \; c_1$ such that
\[
    \varepsilon = \frac{c_1\alpha_k^{(t)} C}{\sigma_{\text{eff}}}\sqrt{T \ln(1/\delta)}  .
\]
According to \eqref{eq:snr}, \( \alpha_k^{(t)} \) can be represented by
\begin{equation}
    \alpha_k^{(t)} = \sqrt{ \frac{ \text{SNR}_k \cdot d N_0 }{ C^2 } }.
\end{equation}
Substituting \( \alpha_k \) and \( \sigma_{\text{eff}} \), we get:
\begin{equation}
    \epsilon = c_1 h_k(t) \sqrt{T \ln(1/\delta)} \cdot \sqrt{ \text{SNR}_k \cdot d }.
\end{equation}
\end{proof}

}
\subsection{Privacy-Aware Power Control}
\label{subsec:power_control}

{\color{black}
Recall that Theorem \ref{thm:privacy} demonstrates the relationship between the desired privacy budget $\varepsilon$ and the SNR of the received gradient \( \hat{\vg}_k(t) \) with respect to the channel coefficients $\{h_k(t)\}$ and the scaling factor $\{\alpha_k\}$. In particular, given a privacy requirement $\varepsilon_0$, the standard deviation of the effective noise \( \sigma_{\text{eff}} \) shall satisfy
\begin{equation}
    \sigma_{\text{eff}} = \frac{\sqrt{N_0}}{h_k(t)} \geq \frac{c_1\alpha_k^{(t)} C}{\varepsilon_0}\sqrt{T \ln(1/\delta)} .
\end{equation} 
Recall the transmit power constraint in \eqref{eq:power_constraint}, the scaling factor \(\alpha_k^{(t)}\) shall be upper bounded by
\begin{equation}
    0\leq\alpha_k^{(t)} \leq \min\left\{ 
        \frac{\varepsilon_0\sqrt{N_0}}{c_1Ch_k(t)\sqrt{T \ln(1/\delta)}},
    \frac{\sqrt{P}}{C} \right\}.
\end{equation}

Thus, a privacy-aware power control strategy can be obtained by maximizing the SNR of the received gradient \( \hat{\vg}_k(t) \), where the transmission scaling factor \(\alpha_k^{(t)}\) should be chosen as,
\begin{equation}
    \alpha_k^{(t)} = \min\left\{ \frac{\varepsilon_0\sqrt{N_0}}{c_1Ch_k(t)\sqrt{T \ln(1/\delta)}}, \frac{\sqrt{P}}{C} \right\}.
\end{equation}

By appropriately adjusting the scaling factor \(\alpha_k^{(t)}\), each edge device can obtain the necessary LDP guarantees. This approach eliminates the need to add artificial noise, improving the convergence performance of the split FedFT system while still ensuring privacy.
}
\begin{remark}
    If $\alpha_k^{(t)} = \frac{\sqrt{P}}{C}$, the transmit power is fully utilized, and the devices can rely solely on the channel noise to provide the necessary privacy guarantees. Otherwise, the transmit power need to be adjusted to meet the privacy requirement.
\end{remark}

\subsection{Challenges in Achieving DP with LoRA in FL}
\label{subsec:challenge}
{\color{black}In this subsection, we further investigate the noise amplification effect due to the cascaded nature of LoRA over the network. 
First, we define the added low-rank matrices as {\color{black}\( \mathbf{A}^{(i)} \in \mathbb{R}^{r \times d} \), \( \mathbf{B}^{(i)} \in \mathbb{R}^{d \times r} \)}, with \( r \ll d \) and the original weight of FMs is \( \vw^{\rm c}_i \in \mathbb{R}^{d \times d} \).

In each training round, the adjusted weights for layer \( i \) are}
\begin{equation}
    \vw^{(i)} = \vw^{\rm c}_i + \Delta \vw^{(i)}, \text{where } \Delta \vw^{(i)} = \mathbf{B}^{(i)} \mathbf{A}^{(i)}.
\end{equation}
For edge device \( k \), the output of layer \( i \) is
\begin{equation}
    \vv_k^{(i)}(t) = \vw^{(i)} \vu_k^{(i)}(t) = \left( \vw^{\rm c}_i + \mathbf{B}^{(i)} \mathbf{A}^{(i)}\right) \vu_k^{(i)}(t),
\end{equation}
where \( \vu_k^{(i)}(t) \) is the input to layer \( i \).

To better illustrate this, we plot part of the computation graph of layer \( i \) in Fig.~\ref{fig:computation_graph}. The gradient of the loss function \( \mathcal{L}_k(t) \) with respect to \( \vv_k^{(i)}(t) \) is given by:

\begin{equation}
    \nabla_{\vv_k^{(i)}(t)} \mathcal{L}_k(t) = \left(\alpha_k \tilde{\vg}_k(t) + \frac{\vn_k(t)}{h_k(t)}\right)\frac{\partial\vz_k(t)}{\partial \vv_k^{(i)}(t)}.
\end{equation}
To analyze the impact of noise, we decompose the gradient into a deterministic component and a noise component. Specifically, we define:
\begin{align}
    \vg_{\vv}^{(i)} &= \alpha_k \tilde{\vg}_k(t) \frac{\partial \vz_k(t)}{\partial \vv_k^{(i)}(t)}, \\
    \vn_{\vv}^{(i)} &= \frac{\vn_k(t)}{h_k(t)}\frac{\partial \vz_k(t)}{\partial \vv_k^{(i)}(t)}.
\end{align}
where \( \vg_{\vv}^{(i)} \) represents the deterministic gradient component, and \( \vn_{\vv}^{(i)} \) captures the propagated noise due to the channel. Here, \( \vn_{\vv}^{(i)} \) denotes the amplified channel noise at layer \( i \) for edge device \( k \).

\begin{figure}[t]
    \centering
    \includegraphics[width=0.49\textwidth]{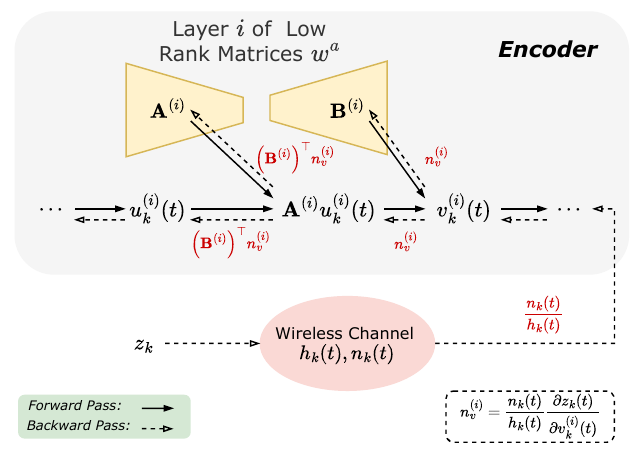}
    \caption{Computation graph of the \( i \)-th layer in the LoRA architecture during backpropagation, illustrating the propagation of gradient noise. The forward pass is represented by solid black arrows, while the backward pass is shown with dashed arrows. Noise introduced in the gradient during backpropagation is indicated by red text beneath the corresponding backward arrows. 
    }
    \label{fig:computation_graph}
\end{figure}

The gradients with respect to \( \mathbf{A}^{(i)} \) and \( \mathbf{B}^{(i)} \) are computed as
\begin{align}
    \nabla_{\mathbf{A}^{(i)}} \mathcal{L}_k(t) &= \left( \mathbf{B}^{(i)} \right)^\top \left( \vg_{\vv}^{(i)} + \vn_{\vv}^{(i)} \right) \left( \vu_k^{(i)}(t) \right)^\top, \\
    \nabla_{\mathbf{B}^{(i)}} \mathcal{L}_k(t) &= \left( \vg_{\vv}^{(i)} + \vn_{\vv}^{(i)} \right) \left( \vu_k^{(i)}(t) \right)^\top \left( \mathbf{A}^{(i)} \right)^\top.
\end{align}
The updates for \( \mathbf{A}^{(i)} \) and \( \mathbf{B}^{(i)} \) using a learning rate \( \eta \) are
\begin{align}
    \mathbf{A}^{(i)}_{\text{new}} &= \mathbf{A}^{(i)} - \eta \nabla_{\mathbf{A}^{(i)}} \mathcal{L}_k(t), \\
    \mathbf{B}^{(i)}_{\text{new}} &= \mathbf{B}^{(i)} - \eta \nabla_{\mathbf{B}^{(i)}} \mathcal{L}_k(t).
\end{align}
The change in the LoRA update is
\begin{equation}
    \Delta \vw^{(i)} = \mathbf{B}^{(i)}_{\text{new}} \mathbf{A}^{(i)}_{\text{new}}  - \mathbf{B}^{(i)}  \mathbf{A}^{(i)}.
\end{equation}

To streamline the notation in the following discussion,  we denote \( \mathbf{A}^{(i)} \) simply as \( \mathbf{A} \), and similarly drop the superscript \((i)\) for all vectors. Additionally, we will use \(\vu \) to denote \( \vu_k^{(i)}(t) \).
Expanding and simplifying the above equations, the change due to noise \( \vn_{\vv} \) can be expressed as
\begin{equation}\label{eq:change_due_to_noise}
    \begin{aligned}
        \vn_{\Delta \vw^{\rm c}} &= -\eta \left(\bB \bB^\top \vn_{\vv} \vu^\top + \vn_{\vv} \vu^\top \bA^\top \bA \right) \\
        &\quad + \eta^2 \left( T_1+T_2+T_3\right).
    \end{aligned}
\end{equation}
The higher-order term, scaled by $\eta^2$ involves the products of gradients and noise as follows,
\begin{equation}\label{eq:higher_order}
    \begin{aligned}
        T_1 &= (\vg_{\vv} \vu^\top \bA^\top)(\bB^\top \vn_{\vv} \vu^\top) \quad \text{(Mixed } \vg_{\vv} \text{ and } \vn_{\vv} \text{)}, \\
        T_2 &= (\vn_{\vv} \vu^\top \bA^\top)(\bB^\top \vg_{\vv} \vu^\top) \quad \text{(Mixed } \vn_{\vv} \text{ and } \vg_{\vv} \text{)},  \\
        T_3 &= (\vn_{\vv} \vu^\top \bA^\top)(\bB^\top \vn_{\vv} \vu^\top) \quad \text{(Quadratic in } \vn_{\vv} \text{)}.
    \end{aligned}    
\end{equation}
From \eqref{eq:higher_order}, we observe that the higher-order terms involve the quadratic and mixed products of $\vg_{\vv}$ and $\vn_{\vv}$. When the learning rate $\eta$ is large, these terms can significantly amplify the noise in the updates, which can adversely influence the convergence of the model and make training unstable.

\subsection{Redesigning the LoRA Architecture}
\label{subsec:redesign}

To solve the amplification of the noise
in \eqref{eq:change_due_to_noise}, we follow the strategy proposed in \cite{sun2024improving} and redesign the LoRA architecture. Specifically, we update only one of the low-rank matrices during training, i.e., matrix \( \mathbf{B} \), while keeping the other matrix \( \mathbf{A} \) fixed. 
With this modified setup, the noise in the gradient only propagates to update \( \mathbf{B} \), while \( \mathbf{A} \) remains constant. The weight update for layer \( i \) simplifies to
\begin{equation}
    \begin{aligned}
        \Delta \vw =  \mathbf{B}_{\text{new}} \mathbf{A}  - \mathbf{B}   \mathbf{A} = - \eta \nabla_{\mathbf{B} } \mathcal{L}_k(t) \mathbf{A} .
    \end{aligned}
\end{equation}
The change due to noise \( \vn_{\vv} \) can be simplified as
\begin{equation}
    \vn_{\Delta \vw^{\rm c}} = -\eta \vn_{\vv} \vu^\top \bA^\top \bA,
\end{equation}
while the energy of $\vn_{\Delta \vw^{\rm c}}$ can be write as
\begin{equation}\label{eq:covariance}
    E_{\vn_{\Delta \vw^{\rm c}}} = \tr(\eta^2 \bA^\top \bA \vu  \text{Cov}( \vn_{\vv})\vu^\top \bA^\top \bA),
\end{equation}
where $\tr$ denotes the trace of a matrix. To further minimize the impact of the amplified noise $\vn_{\vv}^{(i)}$, we initialize \(\mathbf{A} \) as an orthonormal matrix. In this case, \eqref{eq:covariance} can be futher simplified to
\begin{equation}\label{eq:simplecovariance}
    E_{\vn_{\Delta \vw^{\rm c}}} = \tr(\eta^2 \vu  \text{Cov}( \vn_{\vv})\vu^\top).
\end{equation}
Compared with \eqref{eq:covariance}, the simplified energy of $\vn_{\Delta \vw^{\rm c}}$ reduces the effect of the amplified channel noise $\vn_{\vv}^{(i)}$ during the training process, allowing the model to converge more effectively under a given privacy budget. The effectiveness of this approach will be demonstrated through experimental validation next.

\section{Simulation Results} \label{results}

{\color{black}In this section, we introduce the experiment setup in Section \ref{subsec::setup}, and present the experiment results and discussion in Section \ref{subsec::result}.}

\subsection{Experiment Setup}\label{subsec::setup}

Our primary objective is to evaluate the effectiveness of various methods under different privacy budgets. Specifically, we examine the following configurations,

\begin{enumerate}
    \item \textbf{Updating both low-rank matrices \( \mathbf{A} \) and \( \mathbf{B} \) (vanilla LoRA)}: Both matrices are updated during training, serving as the baseline.
    \item \textbf{Updating only \( \mathbf{B} \) with \( \mathbf{A} \) fixed as a Gaussian matrix}: \( \mathbf{A} \) is initialized with Gaussian values and kept fixed.
    \item \textbf{Updating only \( \mathbf{B} \) with \( \mathbf{A} \) initialized as a scaled orthonormal matrix (\( \mathbf{A}^\top \mathbf{A} = 0.01\mathbf{I} \))}: \( \mathbf{A} \) is initialized as a scaled orthonormal matrix to control spectral properties and remains fixed.
\end{enumerate}

We assign each edge device with 5\% data evenly sampled from the global dataset. The accuracy is  evaluated after each epoch to monitor convergence and performance. Other experimental settings are summarized in Table~\ref{tab:experiment-setup}. 
\begin{table}[htp]
    \centering
    \caption{Experimental Settings}
    \label{tab:experiment-setup}
    \begin{tabular}{l|l}
        \hline
        \textbf{Parameter} & \textbf{Value} \\
        \hline
        Dataset & CIFAR-10 \\
        Number of edge devices & 15 \\
        Privacy budget \( \varepsilon \) & \{3, 5, 10, 100\} \\
        DP Parameter \( \delta \) & \(1\times 10^{-5}\) \\
        Gradient clipping norm \( C \) & 0.01 \\
        Pretrained FM & Vision Transformer \cite{dosovitskiy2020vit} \\
        LoRA rank \( r \) & 4 \\
        Optimizer & Adam \\
        Learning rate & \(1 \times 10^{-3}\) \\
        Batch size & 32 \\
        Number of epochs & 30 \\
        \hline
    \end{tabular}
\end{table}

\subsection{\color{black}{Experiment Results and Discussion}}\label{subsec::result}

\begin{figure*}[ht]
    \centering
    \subfigure[$\varepsilon = 3$]{
        \includegraphics[width=0.36\textwidth]{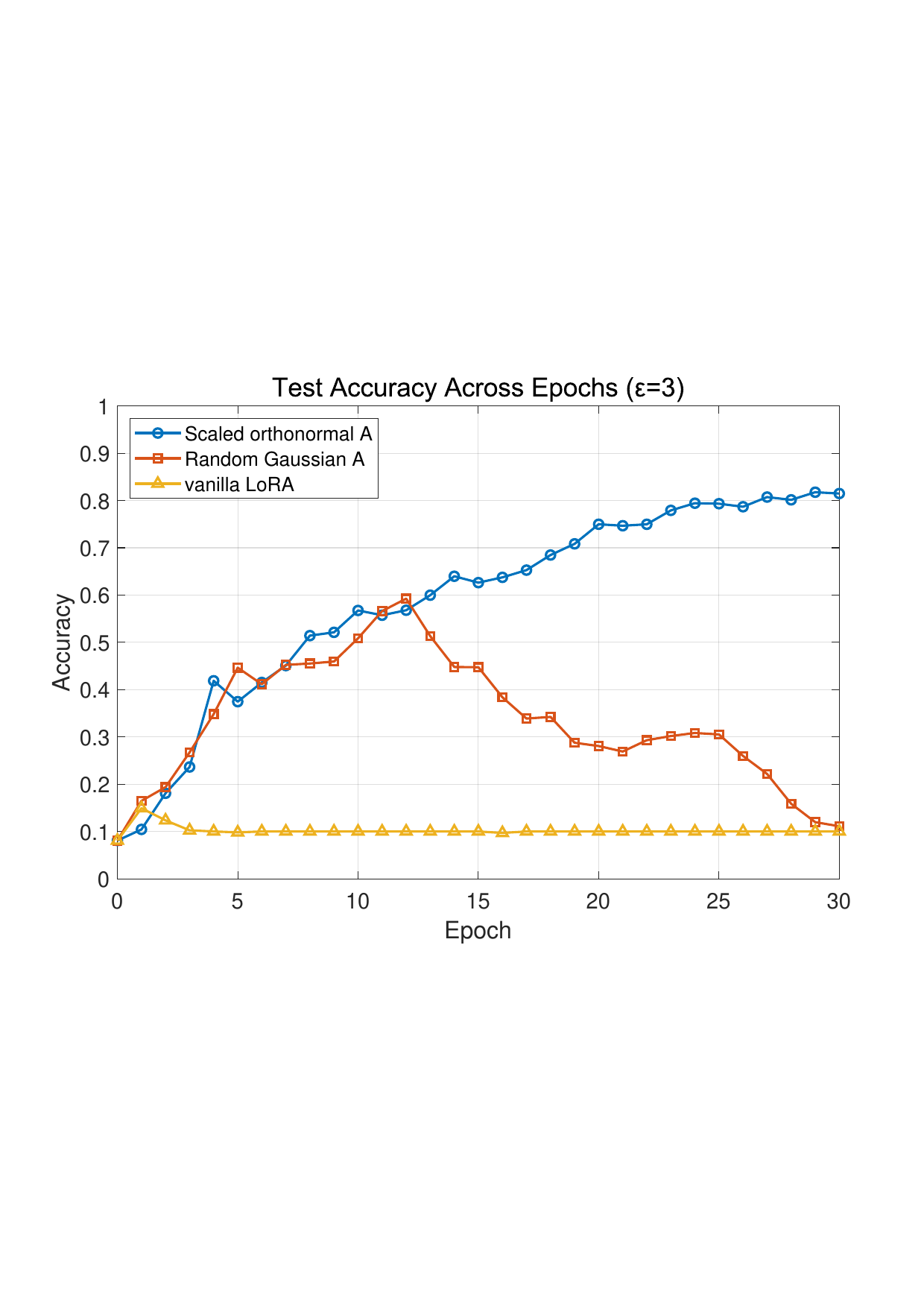}
        \label{fig:subfig1}
    }
    \subfigure[$\varepsilon = 5$]{
        \includegraphics[width=0.36\textwidth]{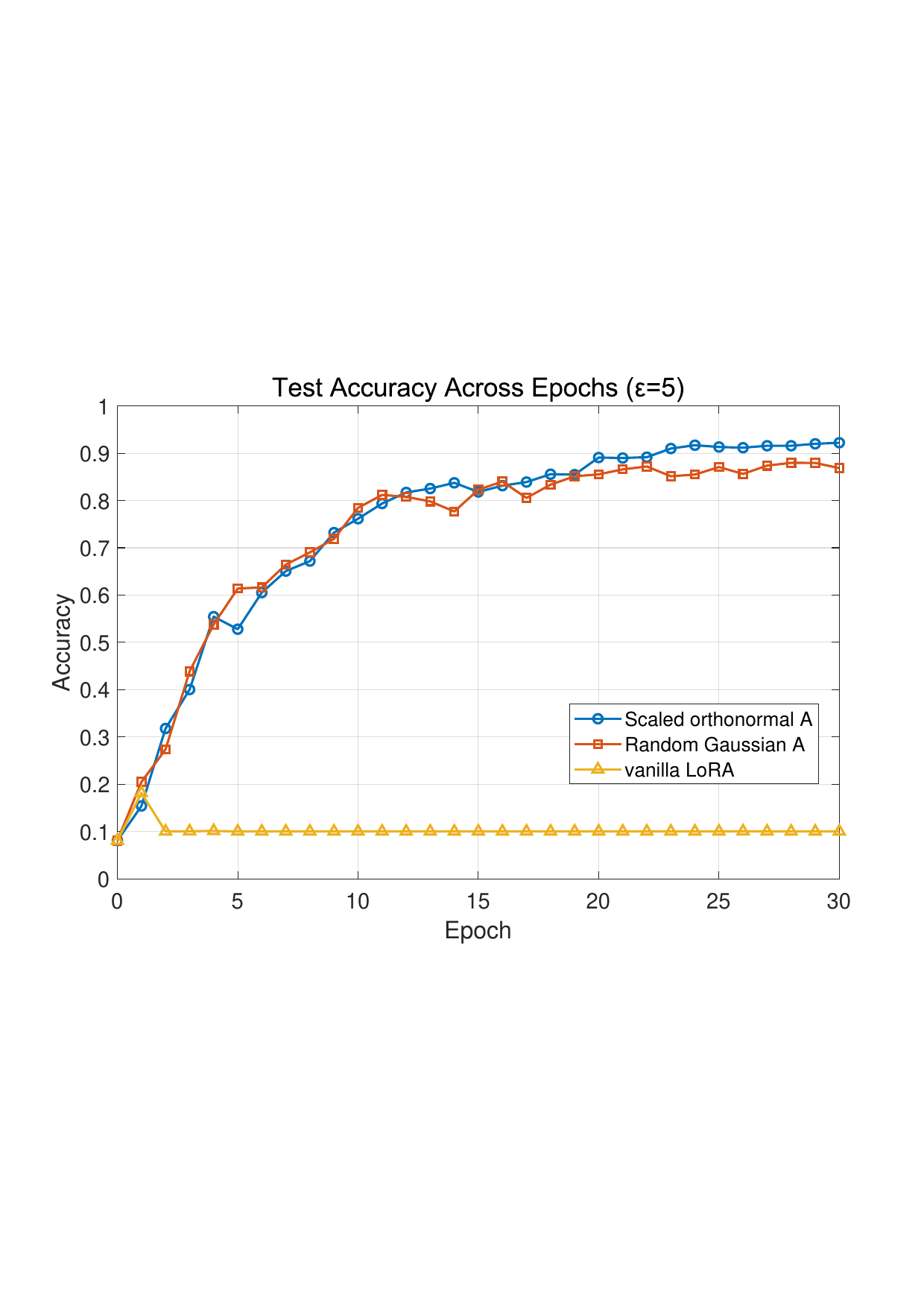}
        \label{fig:subfig2}
    }
    \subfigure[$\varepsilon = 10$]{
        \includegraphics[width=0.36\textwidth]{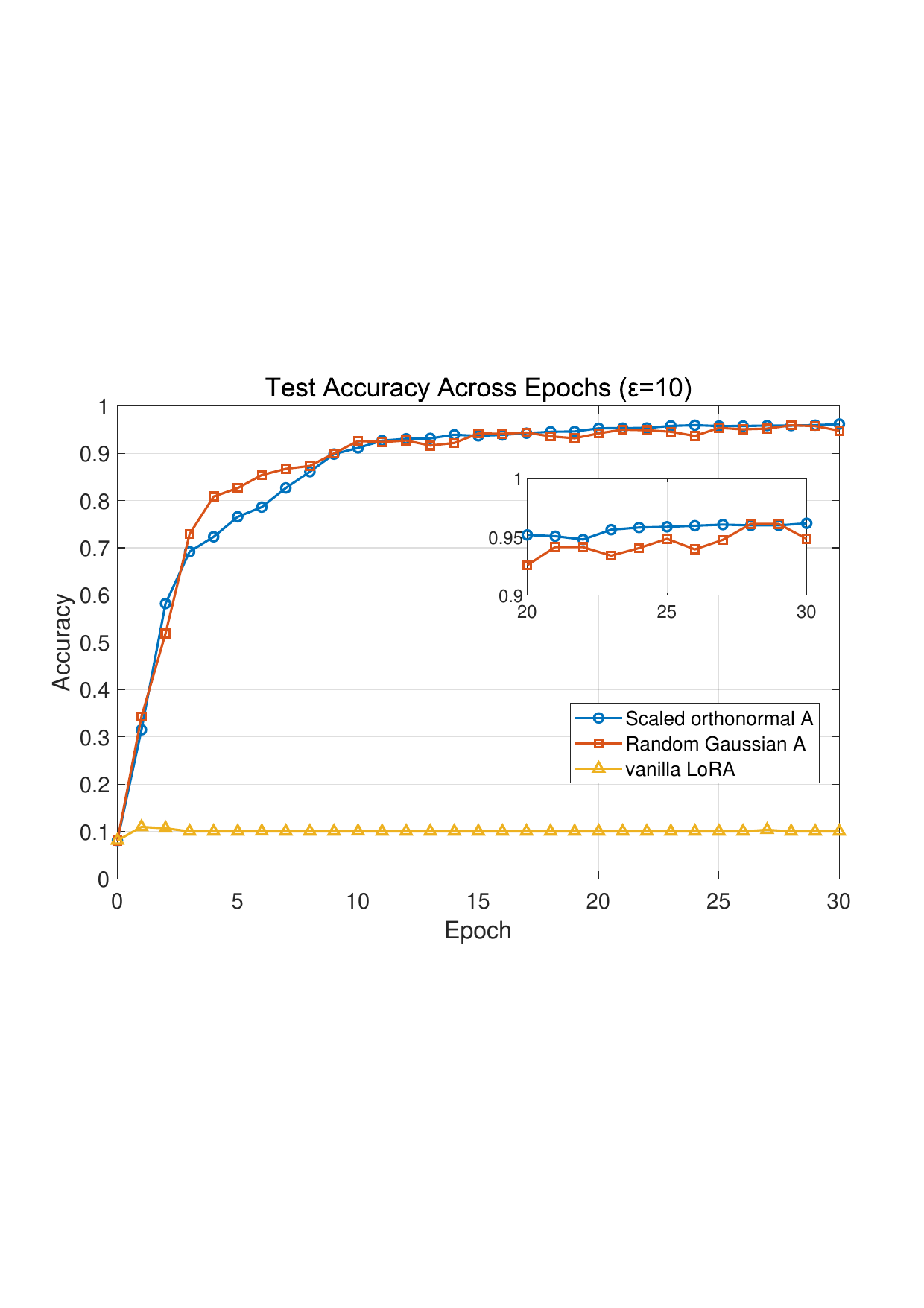}
        \label{fig:subfig3}
    }
    \subfigure[$\varepsilon = 100$]{
        \includegraphics[width=0.36\textwidth]{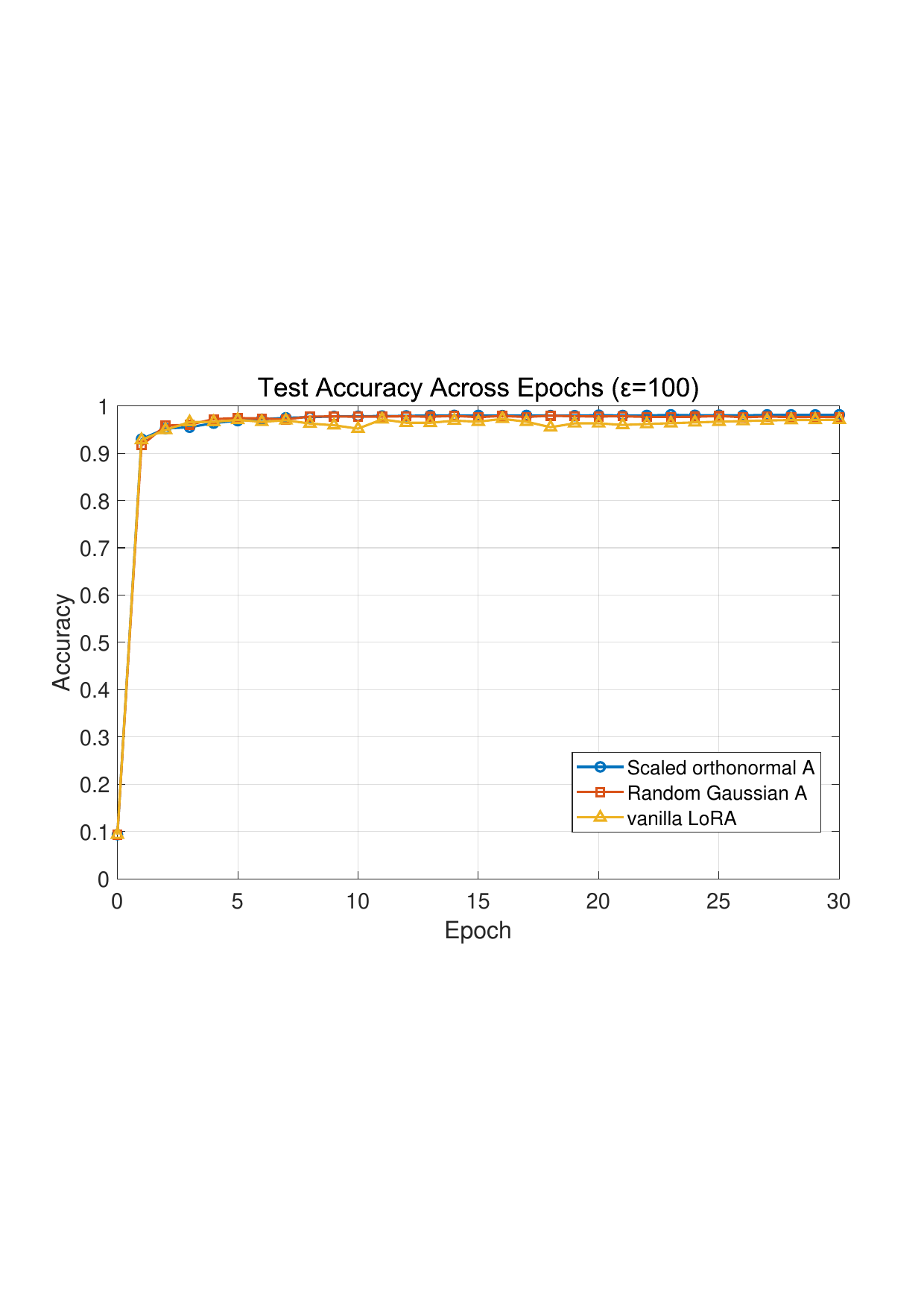}
        \label{fig:subfig4}
    }
    \caption{Test accuracy per epoch for different training configurations across varying privacy budgets (\( \varepsilon = 3 \), \( \varepsilon = 5 \), \( \varepsilon = 10 \), and \( \varepsilon = 100 \)). The configurations are: (1) updating both low-rank matrices \( \mathbf{A} \) and \( \mathbf{B} \) (vanilla LoRA), (2) updating only \( \mathbf{B} \) with \( \mathbf{A} \) fixed as a Gaussian matrix, and (3) updating only \( \mathbf{B} \) with \( \mathbf{A} \) initialized as a scaled orthonormal matrix.}
    \label{fig:privacy-utility}
\end{figure*}

Figure~\ref{fig:privacy-utility} presents the test accuracy over epochs for the three configurations under different privacy budgets \( \varepsilon \). As the privacy budget increases (i.e., privacy requirements are relaxed), the performance of all configurations improves due to the reduction of noise added for differential privacy.

At a strict privacy budget of \( \varepsilon = 3 \) (Fig.~\ref{fig:subfig1}), updating both \( \mathbf{A} \) and \( \mathbf{B} \) fails to learn meaningful features, with the test accuracy remaining close to random guessing. This outcome indicates that the noise introduced severely hampers the model's ability to converge when both matrices are updated, likely due to noise amplification in the cascaded architecture. Updating only \( \mathbf{B} \) with \( \mathbf{A} \) fixed as a Gaussian matrix shows initial improvement but diverges after a few epochs, suggesting that fixing \( \mathbf{A} \) reduces some noise amplification but does not sufficiently control the spectral properties to prevent divergence under strict privacy constraints. In contrast, our proposed method, updating only \( \mathbf{B} \) with \( \mathbf{A} \) initialized as a scaled orthonormal matrix, achieves stable convergence, reaching approximately 85\% test accuracy.

With a moderately relaxed privacy budget of \( \varepsilon = 5 \) (Fig.~\ref{fig:subfig2}), 
the Gaussian \( \mathbf{A} \) configuration improves to about 87\% but still lags behind our method. 

For \( \varepsilon = 10 \) (Fig.~\ref{fig:subfig3}), our method 
slightly surpassing the Gaussian \( \mathbf{A} \) configuration.
The configuration updating both \( \mathbf{A} \) and \( \mathbf{B} \) still fails to converge, emphasizing that noise amplification remains an issue even at this privacy level.
At the most relaxed privacy budget of \( \varepsilon = 100 \), 
updating both \( \mathbf{A} \) and \( \mathbf{B} \) now converges, indicating that at very high privacy budgets (i.e., minimal noise added), the impact of noise amplification is negligible.

The above results demonstrate that our proposed method effectively mitigates noise amplification by controlling the spectral properties of \( \mathbf{A} \), leading to superior performance, especially under strict privacy constraints. By updating only one low-rank matrix and initializing the other as a scaled orthonormal matrix, we enable stable and efficient learning in differentially private federated fine-tuning over wireless networks.

\section{Conclusions} \label{conclusion}
In this paper, we proposed an innovative federated fine-tuning framework that effectively integrates LoRA with DP in a split architecture suitable for resource-constrained edge devices. 
By leveraging the inherent wireless channel noise as a natural DP mechanism and updating only one low-rank matrix while fixing the other as an orthognal matrix, we mitigate noise amplification in the cascaded architecture and enhance training stability. 
Simulation results have demonstrated that the proposed method achieves superior performance over baselines under the same privacy constraints, enabling practical deployment of large-scale AI models on edge devices while ensuring robust privacy protection.

\bibliographystyle{IEEEtran}
\begingroup
\renewcommand{\baselinestretch}{0.97}
\bibliography{IEEEabrv,egbib_updated, ref}
\vspace{12pt}

\end{document}